\documentclass[11pt]{article}

\usepackage[margin=1.15in]{geometry}
\usepackage{amsmath,amssymb,amsthm,mathtools}
\usepackage{enumitem}
\usepackage{microtype}
\usepackage{float}
\usepackage{algorithm}
\usepackage[noend]{algorithmic}
\usepackage{tikz}
\usetikzlibrary{arrows.meta,decorations.pathreplacing}
\usepackage[numbers]{natbib}
\usepackage[colorlinks=true,linkcolor=blue,citecolor=blue,urlcolor=blue]{hyperref}
\usepackage{cleveref}

\newcommand{\E}{\mathbb{E}}
\newcommand{\Prob}{\mathbb{P}}

\newcommand{\lrb}[1]{\left(#1\right)}
\newcommand{\brb}[1]{\bigl(#1\bigr)}

\newcommand{\lsb}[1]{\left[#1\right]}
\newcommand{\bsb}[1]{\bigl[#1\bigr]}
\newcommand{\Bsb}[1]{\Bigl[#1\Bigr]}

\newcommand{\lcb}[1]{\left\{#1\right\}}

\newcommand{\Bcb}[1]{\Bigl\{#1\Bigr\}}

\newcommand{\lfl}[1]{\left\lfloor#1\right\rfloor}

\newcommand{\floor}[1]{\lfl{#1}}
\newcommand{\labs}[1]{\left\lvert#1\right\rvert}

\newcommand{\pos}[1]{\lsb{#1}_{+}}

\theoremstyle{plain}
\newtheorem{theorem}{Theorem}
\newtheorem{lemma}{Lemma}

\theoremstyle{definition}
\newtheorem{definition}{Definition}

\title{Two-Action Apple Tasting with Switching Costs}
\author{
Tommaso Cesari\\
School of Electrical Engineering and Computer Science\\
University of Ottawa\\
Ottawa, Canada\\
\texttt{tcesari@uottawa.ca}
\and
Roberto Colomboni\\
School of Mathematics\\
University of Bristol\\
Bristol, United Kingdom\\
\texttt{roberto.colomboni@bristol.ac.uk}
}
\date{}

\begin{document}
\maketitle

\begin{abstract}
We study the two-action apple-tasting problem with switching costs against an oblivious adversary.
In an equivalent normalized formulation, at each round the learner chooses between a revealing action and a blind action:
the revealing action gives reward $0$ and reveals the hidden value $x_t\in[-1,1]$ of the blind action;
the blind action gives reward $x_t$ but reveals nothing.
The learner pays one unit whenever they switches actions, and regret is measured against the best fixed action in hindsight.

General feedback-graph algorithms with switching costs give
$\widetilde O(T^{2/3})$ regret guarantees for this problem.
The two-action apple-tasting graph was the natural candidate for the missing
$\Omega(T^{2/3})$ obstruction in the switching-cost classification: such a lower
bound would have transferred to a large family of still-unclassified feedback
graphs.
We prove that this obstruction is not there: the oblivious minimax expected regret for this problem satisfies
\begin{equation*}
    \frac{1}{2\sqrt3}\cdot\sqrt T
    \le
    R_T^\star
    \le
    2\sqrt{3}\cdot \sqrt{T}.
\end{equation*}
\end{abstract}

\section{Introduction}

Apple tasting is a basic model of one-sided feedback \cite{HelmboldLL1992,HelmboldLL2000}.
There are two actions: a revealing action $r$ and a blind action $b$.
An oblivious adversary fixes in advance a reward sequence
\[
    \lrb{v_t,w_t}_{t=1}^{T}
    \in
    \lrb{[0,1]^2}^{T},
\]
which is unknown to the learner.
At round $t$, if the learner plays $r$, it receives reward $v_t$ and observes both $v_t$ and $w_t$.
If the learner plays $b$, it receives reward $w_t$ and observes no reward information.
In the switching-cost formulation, the learner also pays one unit whenever it switches actions.
The regret is measured against the better fixed action in hindsight: always reveal, or always play blind.

The existing feedback-graph theory did not determine the minimax rate of this small game.
Without switching costs, feedback graphs have a clean classification by observability \cite{AlonCesaBianchiDekelKoren2015}.
With switching costs, the picture is more delicate.
Adversarial bandits with switching costs have regret $\widetilde\Theta(T^{2/3})$ \cite{DekelDingKorenPeres2014}, and the general feedback-graph algorithms with switching costs give $\widetilde O(T^{2/3})$ regret upper bounds \cite{AroraMarinovMohri2019}.
The two-action revealing graph was therefore a very natural suspect: if apple tasting also had an $\Omega(T^{2/3})$ lower bound, then it could serve as the basic hard graph for a direct classification.

This paper shows that two-action apple tasting cannot help with this route: the oblivious minimax expected regret is $\Theta(\sqrt T)$ and hence the desired $\Omega(T^{2/3})$ lower bound for the revealing two-action graph simply does not exist.

\subsection{A useful reformulation}
\label{par:a_useful_reformulation}
We use an equivalent normalized formulation of the apple tasting problem.
Set
\[
    x_t
\coloneqq
    w_t-v_t
    \in[-1,1].
\]
Subtracting $v_t$ from both action rewards at round $t$ does not change the regret against fixed actions, because it subtracts the same deterministic baseline from the learner and from both fixed benchmarks.
Hence, for every action sequence, the regret on the original rewards $(v_t,w_t)$ is equal to the regret on the normalized rewards $(0,x_t)$.
In the normalized problem, action $r$ has reward $0$ and reveals $x_t$, while action $b$ has reward $x_t$ and reveals nothing.
Thus any algorithm for the normalized problem gives an algorithm for the original problem, with the same regret, by computing $x_t=w_t-v_t$ whenever $r$ is played.
Conversely, every normalized instance $x_t\in[-1,1]$ can be realized by an original apple-tasting instance with rewards in $[0,1]$, for example by taking
\[
    v_t=\pos{-x_t},
    \qquad
    w_t=\pos{x_t}.
\]
Therefore upper and lower bounds for the normalized problem are equivalent to upper and lower bounds for the original apple tasting problem, with the same switching costs.
Without loss of generality, we study the normalized problem from now on.

\subsection{The algorithm and why it works}

\Cref{alg:geometric-break-even-zero}, which achieves the $O(\sqrt{T})$ regret rate, is surprisingly simple.

Let $T$ be the time horizon.
The algorithm has one parameter $p\in(0,1]$ and uses independent Bernoulli-$p$ random variables $Z_1,\ldots,Z_T$, one for each round, as probing variables.
It alternates between two modes: a \emph{blind} mode and an \emph{inspection} mode.
The algorithm starts in blind mode.
We call round $t$ a \emph{blind opportunity} if it begins in blind mode, before the action at round $t$ is chosen.
The probing variable $Z_t$ is used only at blind opportunities and ignored otherwise.
At a blind opportunity $t$, if $Z_t=0$, the algorithm plays the blind action and remains in blind mode.
If $Z_t=1$, the algorithm plays the revealing action at round $t$, switches from blind mode to inspection mode, and initializes an inspection run with the reward observed at that round.

During an inspection run, the algorithm keeps playing the revealing action and monitors the cumulative sum of observed rewards of the blind action since the start of the inspection run.
As soon as the cumulative sum observed since the start of the run becomes nonnegative, the algorithm returns to blind mode.

We now describe why \Cref{alg:geometric-break-even-zero} works.
First, note that if an inspection run completes, then the cumulative observed reward for the blind action in that run is between $0$ and $1$: before the last round of the run the cumulative sum is still negative, while the last increment lies in $[-1,1]$.
If an inspection run never completes, then all partial sums observed during that run stay negative, and the algorithm stays revealing until the horizon.

For each $t\in[T]$, let
\[
    s_t
    \coloneqq
    \sum_{n=1}^{t}x_n
\]
be the cumulative reward of the blind action up to the end of round $t$, and set $s_0 \coloneqq 0$.
Let $M$ be the number of inspection runs started and let $K$ be the number of completed inspection runs.
Notice that, deterministically, $K \le M$ and, since an inspection run starts only when $Z_t=1$ at a blind opportunity,
\[
    \E\lsb{K}
    \le
    \E\lsb{M}
    \le
    pT.
\]
Moreover, each inspection run creates at most one switch from blind to revealing and at most one switch back, so the expected switching cost is at most $2pT$.

Now, the regret analysis splits according to which fixed action is best.
If the blind action is the best fixed action, then $s_T\ge0$ and the regret decomposes as the \emph{signed} reward missed during inspection plus the switching cost.
Since the cumulative reward earned by the blind action during an inspection run is at most $1$, each completed inspection run contributes at most $1$ to the signed inspection term.
A final unfinished run, if present, has negative cumulative reward and therefore cannot increase this signed term.
Hence the inspection contribution is at most $\E\lsb{K}\le pT$, and the regret when the blind action is the best fixed action satisfies
\[
    R_T
    \le
    \underbrace{pT}_{\text{completed inspections}}
    +
    \underbrace{2pT}_{\text{switches}}.
\]
Suppose now that the revealing action is the best fixed action.
Then $s_T<0$ and the regret is the switching cost minus the reward accumulated while playing blind.
The goal is therefore to control the blind-play contribution $-B$ to regret, where
\[
    B
    \coloneqq
    \sum_{t=1}^{T}
    x_t
    \mathbb{I}\lcb{\text{the algorithm plays the blind action at time }t}
\]
is the total reward accumulated while playing blind.

The proof identifies the beginning of the first (and only) \emph{unfinished} inspection run as the key time for the analysis.
Let $\tau$ be the starting time of the first unfinished inspection run, with the convention that $\tau=T+1$ if no such run exists.
From round $\tau$ onward, the algorithm never plays blind again, so all blind plays occur before $\tau$.

Since all blind plays occur before $\tau$, the blind reward $B$ is accumulated entirely before $\tau$.
Before $\tau$, the cumulative sum $s_{\tau-1}$ decomposes into the reward accumulated on blind plays plus the cumulative rewards of the completed inspection runs.
We already observed that each completed inspection run has cumulative reward at most $1$ for the blind action.
Therefore, pathwise,
\[
    -B
    \le
    \pos{-s_{\tau-1}}+K,
\]
because the completed inspection runs can hide at most $K$ units of positive reward inside the cumulative sum $s_{\tau-1}$.
Thus, to control the blind-play contribution to regret, it remains to control the expected depth
\[
    \E\lsb{\pos{-s_{\tau-1}}}.
\]

This is where \emph{rescue opportunities} enter.
A rescue opportunity is a blind opportunity $t$ with the following property: if an inspection run were started at round $t$, its cumulative observed reward would stay negative until the horizon, so the algorithm would never return to blind play.
Thus, if $Z_t=1$ at a rescue opportunity $t$, the learner starts an unfinished inspection run and escapes blind play for the rest of the game.

Let $Q$ be the number of rescue opportunities strictly before $\tau$.
The deterministic part of the proof shows that large negative cumulative reward of the blind action creates many rescue opportunities:
\[
    \pos{-s_{\tau-1}}
    \le
    Q+1.
\]
Indeed, if $\floor{\pos{-s_{\tau-1}}}\ge k$ for some $k$, then the cumulative reward of the blind action has reached level $-k$ before the escape time $\tau$, and the key observation is that for each level $-i$, with $i\in\lcb{1,\ldots,k}$, the \emph{last downcrossing} of level $-i$ before $\tau$ is a rescue opportunity.
The corresponding last-downcrossing times are all distinct, because a single reward increment in $[-1,1]$ cannot downcross two different integer levels at once.
Thus $\floor{\pos{-s_{\tau-1}}}\le Q$, and the extra $+1$ in the displayed bound accounts for the possible non-integer part of $\pos{-s_{\tau-1}}$.

The probabilistic part shows that the missed rescue opportunities are dominated by a geometric clock.
Order the rescue opportunities by time, and let $\rho_j$ be the time of the $j$-th one.
For the analysis, complete the sequence of probes by setting $Y_j \coloneqq Z_{\rho_j}$ if $\rho_j<\infty$, and by using auxiliary independent Bernoulli-$p$ variables if fewer than $j$ rescue opportunities occur.
The completed sequence $(Y_j)_{j\ge1}$ is i.i.d. Bernoulli-$p$.
Define
\[
    G
    \coloneqq
    \inf\lcb{j\ge1:Y_j=1}-1,
\]
so that $G$ counts the number of failures before the first success.
Every rescue opportunity strictly before $\tau$ must have been missed; otherwise it would have started the first unfinished inspection run.
Therefore $Q\le G$ pathwise, and
\[
    \E\lsb{G}
    =
    \frac{1-p}{p}.
\]
Consequently,
\[
    \E\lsb{\pos{-s_{\tau-1}}}
    \le
    \E\lsb{G}+1
    =
    \frac1p.
\]
Putting everything together and adding the switching cost yields that the regret when the revealing action is the best fixed action satisfies
\[
    R_T
    \le
    \underbrace{\frac1p}_{\text{escape depth}}
    +
    \underbrace{pT}_{\text{completed inspections}}
    +
    \underbrace{2pT}_{\text{switches}}.
\]
Thus, in all cases,
\[
    R_T\le \frac1p+3pT,
\]
and $p=1/\sqrt{3T}$ yields the desired regret rate.

The lower bound is just the standard Rademacher argument: if the normalized hidden rewards $X_t$ are independent fair signs, the learner has zero expected reward from blind plays, while the best fixed action in hindsight gains the positive part of a random walk, of order $\sqrt T$.
Switching costs are nonnegative, so the same lower bound already holds when switching is free.

The rest of the paper is organized as follows.
Section~\ref{sec:related-work} discusses the surrounding literature.
Section~\ref{sec:problem-setting} defines the model and the regret decomposition.
Section~\ref{sec:algorithm} gives the geometric-probing algorithm.
Sections~\ref{sec:rescue-opportunity-lemma} and~\ref{sec:upper-bound} prove the upper bound.
Section~\ref{sec:lower-bound} proves the lower bound, and Section~\ref{sec:oblivious-minimax} states the minimax conclusion.
\section{Related work}
\label{sec:related-work}

Apple tasting was introduced by Helmbold, Littlestone, and Long \cite{HelmboldLL1992}, with a full journal version in \cite{HelmboldLL2000}.
Recent work of Raman, Subedi, Raman, and Tewari \cite{RamanSubediRamanTewari2024} gives combinatorial characterizations and minimax rates for online binary classification under apple-tasting feedback.
Partial monitoring gives a general language for online games where the learner observes a signal rather than the full loss vector \cite{CesaBianchiLugosiStoltz2006}.
Feedback graphs, introduced by Mannor and Shamir \cite{MannorShamir2011}, encode side observations by drawing an edge from action $i$ to action $j$ when playing $i$ reveals the loss or reward of $j$.
Alon, Cesa-Bianchi, Dekel, and Koren \cite{AlonCesaBianchiDekelKoren2015} classify fixed feedback graphs without switching costs through their observability structure.

Switching costs are the reason why the two-action apple-testing problem was still open.
In full-information experts with switching costs, the minimax rate remains of order $\sqrt T$ up to logarithmic factors, and can be achieved by switching-cost-aware variants of follow-the-perturbed-leader and weighted-majority methods \cite{KalaiVempala2005,GeulenVoeckingWinkler2010}.
For adversarial bandits, the story changes: Dekel, Ding, Koren, and Peres \cite{DekelDingKorenPeres2014} prove the minimax rate $\widetilde\Theta(T^{2/3})$ with switching costs.
Arora, Marinov, and Mohri \cite{AroraMarinovMohri2019} give algorithms for self-observing feedback graphs with switching costs whose bounds scale as $\widetilde O(\gamma(G)^{1/3}T^{2/3})$, where $\gamma(G)$ is the domination number of the graph.
For the two-action revealing structure, this left open whether switching costs force the $T^{2/3}$ scale, since the elementary lower bound is only of order $\sqrt T$.
Our result closes exactly this gap.

\section{Problem setting}
\label{sec:problem-setting}

We use the normalized equivalent model for apple tasting we discussed in \Cref{par:a_useful_reformulation}.

The time horizon is $T$.
An adversary fixes a deterministic sequence $x_1,\ldots,x_T\in[-1,1]$ before the game starts.
The learner does not know this sequence.

\medskip
\noindent\textbf{Online protocol.}
For each round $t=1,\ldots,T$, the following interaction occurs.
\begin{enumerate}[label=\arabic*.,leftmargin=2.2em]
    \item The learner chooses an action $A_t\in\lcb{r,b}$, using only previous observations and its internal randomization.
    \item If $A_t=r$, the learner receives reward $0$ and observes $x_t$.
    \item If $A_t=b$, the learner receives reward $x_t$ but does not observe $x_t$.
\end{enumerate}
Receiving reward is not feedback in this model: when $A_t=b$, the value $x_t$ contributes to the learner's reward and to regret, but its numerical value is not revealed and cannot be used in later decisions.
After the last round, the learner also pays one unit for each time it changed actions between consecutive rounds.

The number of switches is
\[
    N_T
\coloneqq
    \sum_{t=1}^{T-1}\mathbb{I} \lcb{A_t\neq A_{t+1}}.
\]
The fixed action $r$ has total reward $0$.
The fixed action $b$ has total reward
\[
    s_T
\coloneqq
    \sum_{t=1}^{T}x_t.
\]
For a learner $\pi$ and a fixed reward sequence $x_{1:T}$, the expected regret is
\[
    R_T^\pi(x_{1:T})
    =
    \E\lsb{
        \max\lcb{0,s_T}
        -
        \sum_{t:A_t=b}x_t
        +
        N_T
    }.
\]
The expectation is over the learner's randomization.
When the learner and the sequence are clear from context, we simply write $R_T$.
The oblivious minimax expected regret is
\[
    R_T^{\star}
    =
    \inf_{\pi}
    \sup_{x_1,\ldots,x_T\in[-1,1]}
    R_T^\pi(x_{1:T}),
\]
where the infimum is over all learner strategies $\pi$.

The proof repeatedly uses the following \emph{signed} decomposition of expected regret.
Set
\[
    R_{\mathrm{sw}}
    =
    \E\lsb{N_T}.
\]
If $s_T\ge0$, the blind action $b$ is the best fixed action.
Then
\begin{equation}
\label{eq:regret-decomposition-positive}
    R_T
    =
    R_{\mathrm{insp}}+R_{\mathrm{sw}},
    \qquad
    R_{\mathrm{insp}}
    =
    \E\lsb{\sum_{t:A_t=r}x_t}.
\end{equation}
If $s_T<0$, the revealing action $r$ is the best fixed action.
Then
\begin{equation}
\label{eq:regret-decomposition-negative}
    R_T
    =
    R_{\mathrm{blind}}+R_{\mathrm{sw}},
    \qquad
    R_{\mathrm{blind}}
    =
    \E\lsb{-\sum_{t:A_t=b}x_t}.
\end{equation}

\section{Algorithm}
\label{sec:algorithm}

The algorithm has two modes.
In blind mode it usually plays $b$, but it occasionally probes by playing $r$.
In inspection mode it keeps playing $r$ while the cumulative reward observed during the current inspection run is negative.
As soon as the cumulative reward in the current inspection run becomes nonnegative, the inspection run has broken even relative to blind play, and the algorithm returns to blind mode.
This is the break-even rule.

\begin{algorithm}[H]
\caption{Geometric probing with the break-even rule}
\label{alg:geometric-break-even-zero}
\begin{algorithmic}[1]
    \STATE \textbf{Input:} probing probability $p\in(0,1]$
    \STATE \textbf{Initialize:} set \textsc{mode} to \textsc{blind}; set the inspection run sum $C=0$
    \FOR{$t=1,2,\ldots,T$}
        \IF{\textsc{mode} is \textsc{blind} at the beginning of round $t$}
            \STATE Draw an independent Bernoulli random variable $Z_t$ with mean $p$
            \IF{$Z_t=0$}
                \STATE Play $b$ and keep \textsc{mode} equal to \textsc{blind}
            \ELSE
                \STATE Play $r$, observe $x_t$, set $C=x_t$, and set \textsc{mode} to \textsc{inspection}
            \ENDIF
        \ELSE
            \STATE Play $r$, observe $x_t$, and update $C\leftarrow C+x_t$
        \ENDIF
        \IF{$r$ was played at the current round $t$ and $C\ge 0$}
            \STATE Declare the current inspection run completed, set \textsc{mode} to \textsc{blind}, and reset $C=0$
        \ENDIF
    \ENDFOR
\end{algorithmic}
\end{algorithm}

Equivalently, suppose that an inspection run starts at time $a$.
It is completed if there is a first time $c\in\lcb{a,\ldots,T}$ such that
\[
    \sum_{n=a}^{c}x_n\ge0.
\]
It is unfinished if
\[
    \sum_{n=a}^{u}x_n<0
    \qquad
    \text{for every }u\in\lcb{a,\ldots,T}.
\]
Thus a completed inspection run is a temporary inspection phase that has recovered to break-even cumulative reward.
An unfinished inspection run is an inspection phase that, once started, keeps seeing negative cumulative reward until the horizon.

\section{The rescue-opportunity lemma}
\label{sec:rescue-opportunity-lemma}

This section contains the nontrivial estimate in the oblivious upper bound.
The reward sequence $x_1,\dots,x_T$ is fixed throughout the section; the only algorithmic randomness comes from the learner's independent $p$-Bernoulli random variables $Z_1,\dots,Z_T$.
An auxiliary Bernoulli-$p$ i.i.d.\ sequence $\xi_1,\xi_2,\dots$, independent of $Z_1,\dots,Z_T$, is introduced below only as a proof device.

Let
\[
    s_t
\coloneqq
    \sum_{n=1}^{t}x_n,
    \qquad
    s_0\coloneqq 0.
\]
Let $\tau$ be the starting time of the first unfinished inspection run.
If no unfinished inspection run exists, set $\tau=T+1$.
By definition, every inspection run before time $\tau$ is completed.

\begin{definition}[Blind and rescue opportunities]
A \emph{blind opportunity} is a round that starts while the algorithm is in blind mode.
A blind opportunity $t$ is a \emph{rescue opportunity} if an inspection run started at $t$ would be unfinished,
i.e., if
\begin{equation}
\label{eq:rescue-def-sums}
    \sum_{n=t}^{u}x_n
    <0
    \qquad
    \text{for every }u\in\lcb{t,\ldots,T}.
\end{equation}
Equivalently, a blind opportunity $t$ is a \emph{rescue opportunity} if 
\[
    s_u<s_{t-1}
    \qquad
    \text{for every }u\in\lcb{t,\ldots,T}.
\]
\end{definition}
Let $Q$ be the number of rescue opportunities strictly before $\tau$.
If the learner probes at a rescue opportunity, it escapes blind play for good: the resulting inspection run never returns to blind mode.
This section proves that the expected depth $\E\bsb{\pos{-s_{\tau-1}}}$ reached before this escape is at most $1/p$.
The proof follows the structure of the argument.
First, last downcrossings create rescue opportunities.
Second, large negative cumulative reward of the blind action creates many distinct last-downcrossing \emph{witnesses}, and therefore many rescue opportunities.
Third, the missed rescue opportunities before $\tau$ are dominated by the number of failures before the first success in a completed sequence of rescue probes.
Finally, this completed sequence is i.i.d.\ Bernoulli-$p$, so the dominating variable is geometric.

Here a \emph{downcrossing} of level $-k$ means a time $t$ with $s_{t-1}>-k$ and $s_t\le -k$.
Figure~\ref{fig:rescue-downcrossing} sketches the two deterministic ingredients: last downcrossings create rescue opportunities, and distinct integer levels have distinct last-downcrossing witnesses.

\begin{figure}[t]
\centering
\begin{tikzpicture}[
    x=0.90cm,y=0.78cm,>=Stealth,
    every node/.style={font=\footnotesize},
    pathline/.style={very thick,line cap=round,line join=round},
    guide/.style={densely dashed,gray!65},
    marker/.style={densely dotted,gray!65}
]
    \begin{scope}
        \node[anchor=west,font=\footnotesize\bfseries] at (0.15,0.92) {(a)};
        \draw[->] (-0.2,0) -- (6.7,0) node[right,font=\footnotesize] {time};
        \draw[->] (0,-2.95) -- (0,1.15) node[above left,font=\footnotesize] {$s_u$};

        \draw[guide] (6.0,-0.35) -- (0,-0.35) node[left,black,font=\scriptsize] {$s_{t_k-1}$};
        \draw[guide] (6.0,-1.00) -- (0,-1.00) node[left,black,font=\scriptsize] {$-k$};

        \draw[pathline]
            plot coordinates {
                (0,0.55) (1.00,-0.35) (1.95,-1.08) (2.95,-1.28)
                (3.85,-2.00) (4.95,-1.72) (5.95,-1.88)
            };

        \fill (1.00,-0.35) circle (1.6pt);
        \fill (1.95,-1.08) circle (1.9pt);

        \draw[marker] (1.95,0.85) -- (1.95,-2.65) node[below,black] {$t_k$};
        \draw[marker] (4.95,0.85) -- (4.95,-2.65) node[below,black] {$\tau$};

        \node[align=center,font=\scriptsize] at (2.70,0.68)
            {last\\downcrossing};
        \draw[->] (2.45,0.30) -- (2.02,-0.93);

        \node[align=center,font=\scriptsize] at (4.20,-2.55)
            {stays below\\$s_{t_k-1}$};
        \draw[->] (4.15,-2.20) -- (4.75,-1.75);
    \end{scope}

    \begin{scope}[xshift=8.0cm]
        \node[anchor=west,font=\footnotesize\bfseries] at (0.15,0.92) {(b)};
        \draw[->] (-0.2,0) -- (6.7,0) node[right,font=\footnotesize] {time};
        \draw[->] (0,-3.05) -- (0,1.15) node[above left,font=\footnotesize] {$s_u$};

        \draw[guide] (6.0,-0.85) -- (0,-0.85) node[left,black,font=\scriptsize] {$-k$};
        \draw[guide] (6.0,-1.95) -- (0,-1.95) node[left,black,font=\scriptsize] {$-\ell$};

        \draw[pathline]
            plot coordinates {
                (0,0.52) (1.60,-0.48) (2.35,-0.72) (3.10,-2.18)
                (4.10,-2.02) (5.35,-2.35) (6.00,-2.20)
            };

        \fill (2.35,-0.72) circle (1.8pt);
        \fill (3.10,-2.18) circle (1.8pt);

        \draw[marker] (3.10,0.88) -- (3.10,-2.80) node[below,black] {$t$};

        \draw[decorate,decoration={brace,amplitude=5pt}]
            (3.42,-0.72) -- (3.42,-2.18);

        \node[anchor=west,font=\scriptsize,align=left] at (5,-1.45)
            {same step crosses\\two integer levels};

        \node[anchor=west,font=\scriptsize,align=left] at (4.10,-2.72)
            {so $x_t=s_t-s_{t-1}<-1$};

        \draw[->] (5.10,-1.45) -- (3.70,-1.45);

        \node[align=center,font=\scriptsize] at (4.55,0.58)
            {impossible if $x_t\in[-1,1]$};
    \end{scope}
\end{tikzpicture}
\caption{The deterministic mechanism behind Lemmas~\ref{lem:last-downcrossing-is-rescue} and~\ref{lem:large-blind-loss-rescue-count}.  
Panel (a): the last downcrossing time (or witness) $t_k$ of level $-k$ before $\tau$ is a rescue opportunity. After $t_k$, the path stays strictly below $s_{t_k-1}$ until the horizon; hence an inspection run started at $t_k$ would never complete. The proof of Lemma~\ref{lem:last-downcrossing-is-rescue} also shows that $t_k$ is a blind opportunity.  
Panel (b): different integer levels give different witness times. If the same step ending at time $t$ downcrossed both $-k$ and $-\ell$ with $1\le k<\ell$, then
\(
s_{t-1}>-k
\)
and
\(
s_t\le -\ell
\),
so
\(
x_t=s_t-s_{t-1}<-\ell+k\le -1
\),
contradicting $x_t\ge -1$.}
\label{fig:rescue-downcrossing}
\end{figure}

\begin{lemma}[Last downcrossings before $\tau$ are rescue opportunities]
\label{lem:last-downcrossing-is-rescue}
Fix a reward sequence and the corresponding trajectory of the algorithm.
Let $k\ge1$ be an integer such that $s_{\tau-1}\le -k$.
Let $t_k$ be the last downcrossing time of level $-k$ before $\tau$, that is,
\[
    t_k<\tau,
    \qquad
    s_{t_k-1}>-k,
    \qquad
    s_{t_k}\le -k,
\]
and there is no $t\in\lcb{t_k+1,\ldots,\tau-1}$ such that
\[
    s_{t-1}>-k,
    \qquad
    s_t\le -k.
\]
Then $t_k$ is a rescue opportunity.
\end{lemma}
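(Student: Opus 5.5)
The plan is to verify the two requirements in the definition of a rescue opportunity separately. First I show the path condition $s_u<s_{t_k-1}$ for every $u\in\lcb{t_k,\ldots,T}$. Then I show that $t_k$ is a blind opportunity. Both parts rest on one intermediate claim, which I establish first:
\[
    s_u\le -k \qquad\text{for every } u\in\lcb{t_k,\ldots,\tau-1}.
\]
I prove the claim by contradiction. Suppose some $u$ in this range has $s_u>-k$. The hypothesis $s_{\tau-1}\le -k$ says the path is again at or below $-k$ at time $\tau-1$. So there is a first time $t\in\lcb{u+1,\ldots,\tau-1}$ with $s_t\le -k$, and it satisfies $s_{t-1}>-k$. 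This $t$ is a downcrossing of level $-k$ strictly after $t_k$ and before $\tau$, which contradicts the choice of $t_k$ as the last one.

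\emph{Path condition.} For $u\in\lcb{t_k,\ldots,\tau-1}$ the claim gives $s_u\le -k<s_{t_k-1}$, using $s_{t_k-1}>-k$. If $\tau=T+1$, this already covers all $u\le T$. If $\tau\le T$, then $\tau$ starts an unfinished inspection run. By the equivalent characterization of unfinished runs after \Cref{alg:geometric-break-even-zero}, this means $s_u<s_{\tau-1}$ for all $u\in\lcb{\tau,\ldots,T}$. Combined with $s_{\tau-1}\le -k<s_{t_k-1}$, this gives the strict inequality on the remaining times.

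\emph{Blind opportunity.} This is the step needing the most care, because it uses the structure of the algorithm rather than only the path. Suppose, for contradiction, that round $t_k$ begins in inspection mode. Then $t_k$ lies inside an inspection run started at some $a<t_k$. Since $a<\tau$ and $\tau$ is the start of the first unfinished run, this run is completed, at some first time $c$. Because inspection mode persists at the start of round $t_k$, we have $c\ge t_k$. Because runs do not overlap, and the run starting at $\tau$ begins only after this one ends, we also have $c<\tau$. Since $c$ is the first completion time and $t_k-1\ge a$, the partial sum up to $t_k-1$ is still negative, that is, $s_{t_k-1}<s_{a-1}$. Completion means $s_c\ge s_{a-1}$. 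Chaining these gives
\[
    s_c\ge s_{a-1}>s_{t_k-1}>-k,
\]
while $c\in\lcb{t_k,\ldots,\tau-1}$ and the claim give $s_c\le -k$. This is a contradiction, so $t_k$ is a blind opportunity.

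The main obstacle is the ordering bookkeeping $t_k\le c<\tau$, in particular making sure the completed run cannot straddle $\tau$. I would handle this by noting that a new run can only start at a blind opportunity, which exists only after the previous run has completed.
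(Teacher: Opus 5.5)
Your proof is correct and follows the paper's argument essentially step for step: first $s_u\le -k$ on $\{t_k,\ldots,\tau-1\}$, then the global path condition (using the unfinished run at $\tau$ when $\tau\le T$), then ruling out inspection mode at $t_k$ via a completed run $[a,c]$ with $a<t_k\le c<\tau$. The only cosmetic difference is that you reach the contradiction through $s_c\le -k$ from your intermediate claim, whereas the paper uses $s_c<s_{t_k-1}$ from the global path condition; these agree because $c<\tau$.
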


\begin{proof}
Because $s_{\tau-1}\le -k$, after time $t_k$ the path cannot go above $-k$ before $\tau$.
Indeed, if $s_u>-k$ for some $u\in\lcb{t_k,\ldots,\tau-1}$, then, since $s_{\tau-1}\le -k$, the path would have to cross level $-k$ from above once more between $u$ and $\tau-1$, contradicting the definition of $t_k$.
Thus
\begin{equation}
\label{eq:last-downcrossing-path-below-level}
    s_u\le -k
    \qquad
    \text{for every }u\in\lcb{t_k,\ldots,\tau-1}.
\end{equation}
By \eqref{eq:last-downcrossing-path-below-level} and $s_{t_k-1}>-k$,
\begin{equation}
\label{eq:last-downcrossing-path-below-start-before-tau}
    s_u<s_{t_k-1}
    \qquad
    \text{for every }u\in\lcb{t_k,\ldots,\tau-1}.
\end{equation}
If $\tau\le T$, then the inspection run starting at $\tau$ is unfinished, and hence
\[
    s_u-s_{\tau-1}<0
    \qquad
    \text{for every }u\in\lcb{\tau,\ldots,T}.
\]
Since $s_{\tau-1}\le -k<s_{t_k-1}$, this gives
\[
    s_u<s_{t_k-1}
    \qquad
    \text{for every }u\in\lcb{\tau,\ldots,T}.
\]
If $\tau=T+1$, the latter range is empty.
Combining this with \eqref{eq:last-downcrossing-path-below-start-before-tau}, we obtain
\begin{equation}
\label{eq:last-downcrossing-path-below-start-global}
    s_u<s_{t_k-1}
    \qquad
    \text{for every }u\in\lcb{t_k,\ldots,T}.
\end{equation}
Thus an inspection run started at time $t_k$, if started, would be unfinished.

It remains to check that $t_k$ is a blind opportunity.
Suppose not.
Since $t_k<\tau$, every round before $\tau$ that starts in inspection mode belongs to a completed inspection run.
Hence $t_k$ lies strictly inside some completed inspection run $[a,c]$ with
\[
    a<t_k\le c<\tau.
\]
The inequality is strict on the left because the starting round $a$ of an inspection run itself begins in blind mode.
Since the inspection run has not completed before round $t_k$, we have
\[
    s_{t_k-1}-s_{a-1}<0.
\]
Therefore $s_{a-1}>s_{t_k-1}$.
At the completion time $c$,
\[
    s_c-s_{a-1}\ge0,
\]
and so
\[
    s_c\ge s_{a-1}>s_{t_k-1}.
\]
This contradicts \eqref{eq:last-downcrossing-path-below-start-global}, because $c\in\lcb{t_k,\ldots,T}$.
Therefore $t_k$ is a blind opportunity.
Combining this with \eqref{eq:last-downcrossing-path-below-start-global}, we get
\[
    \sum_{n=t_k}^{u}x_n<0
    \qquad
    \text{for every }u\in\lcb{t_k,\ldots,T}.
\]
Thus an inspection run started at $t_k$ would be unfinished.
Since $t_k$ is a blind opportunity, $t_k$ is a rescue opportunity.
\end{proof}

\begin{lemma}[Large blind-reward losses create rescue opportunities]
\label{lem:large-blind-loss-rescue-count}
For every fixed reward sequence,
\[
    \pos{-s_{\tau-1}}
    \le
    Q+1.
\]
\end{lemma}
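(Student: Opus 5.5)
We reduce the statement to the integer claim $\floor{\pos{-s_{\tau-1}}}\le Q$. Granting it, the bound follows from $y\le\floor{y}+1$ for every $y\ge0$ (applied with $y=\pos{-s_{\tau-1}}$). Set $k^\star\coloneqq\floor{\pos{-s_{\tau-1}}}$. If $k^\star=0$ there is nothing to prove, since $Q\ge0$. So assume $k^\star\ge1$; then $s_{\tau-1}\le -k$ for every integer $k\in\lcb{1,\ldots,k^\star}$.

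\textbf{Existence of witnesses.} For each such $k$ we show that level $-k$ is downcrossed at least once in $\lcb{1,\ldots,\tau-1}$. Since $s_0=0>-k$ and $s_{\tau-1}\le -k$, let $t$ be the first index in $\lcb{1,\ldots,\tau-1}$ with $s_t\le -k$. Then $s_{t-1}>-k$, so $t$ is a downcrossing. Hence the set of downcrossing times of level $-k$ in $\lcb{1,\ldots,\tau-1}$ is nonempty and finite, and we let $t_k$ be its maximum. This $t_k$ satisfies exactly the hypotheses of Lemma~\ref{lem:last-downcrossing-is-rescue}, so $t_k$ is a rescue opportunity, and $t_k<\tau$ by construction.

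\textbf{Distinctness of witnesses.} We show that the times $t_1,\ldots,t_{k^\star}$ are pairwise distinct, following panel~(b) of Figure~\ref{fig:rescue-downcrossing}. Suppose $t_k=t_\ell=t$ with $1\le k<\ell$. Then $s_{t-1}>-k$ and $s_t\le-\ell$. This gives $x_t=s_t-s_{t-1}<-\ell+k\le-1$, contradicting $x_t\in[-1,1]$.

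Thus $t_1,\ldots,t_{k^\star}$ are $k^\star$ distinct rescue opportunities strictly before $\tau$, and so $Q\ge k^\star$. This proves the integer claim and hence the lemma.

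The step needing the most care is the reduction to Lemma~\ref{lem:last-downcrossing-is-rescue}. One must check that the last downcrossing before $\tau$ is well defined, i.e.\ that some downcrossing occurs before $\tau$ at all. This holds because $s_0=0>-k$ and $s_{\tau-1}\le-k$. The case $\tau=1$ is harmless: then $s_{\tau-1}=s_0=0$, so $k^\star=0$. Everything else is elementary, since the hard work (that $t_k$ is a blind opportunity and that an inspection run started there never completes) is already contained in Lemma~\ref{lem:last-downcrossing-is-rescue}.
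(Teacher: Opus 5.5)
Your proof is correct and follows essentially the same route as the paper. For each integer level up to $\floor{\pos{-s_{\tau-1}}}$ you take the last downcrossing before $\tau$, invoke Lemma~\ref{lem:last-downcrossing-is-rescue}, get injectivity from the one-step bound $x_t\ge-1$, and conclude with $y\le\floor{y}+1$. Your first-crossing argument for the existence of a downcrossing, and your remark on $\tau=1$, just spell out what the paper states in one line.
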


\begin{proof}
If $\pos{-s_{\tau-1}}<1$, this is immediate.
Assume therefore that $\pos{-s_{\tau-1}}\ge1$.
Fix an integer
\[
    k\in
    \Bcb{1,\ldots,\floor{\pos{-s_{\tau-1}}}}.
\]
Then $s_{\tau-1}\le -k$.
Since $s_0=0>-k$, the path must cross level $-k$ from above before time $\tau$.
Let $t_k$ be the last downcrossing time of level $-k$ before $\tau$.
By Lemma~\ref{lem:last-downcrossing-is-rescue}, $t_k$ is a rescue opportunity strictly before $\tau$.

Different integer levels produce different times $t_k$.
Indeed, if the same time $t$ downcrossed both levels $-k$ and $-\ell$ with $1\le k<\ell$, then
\[
    s_{t-1}>-k,
    \qquad
    s_t\le-\ell.
\]
Therefore
\[
    x_t
    =
    s_t-s_{t-1}
    <
    -\ell+k
    \le
    -1,
\]
contradicting $x_t\ge-1$.
Thus the map
\[
    k\mapsto t_k
\]
is \emph{injective} from $\Bcb{1,\ldots,\floor{\pos{-s_{\tau-1}}}}$ into the set of rescue opportunities strictly before $\tau$.
Hence
\[
    \floor{\pos{-s_{\tau-1}}}
    \le
    Q.
\]
Since any real number is strictly smaller than one plus its integer part, this gives
\[
    \pos{-s_{\tau-1}}
    \le
    Q+1.
\]
This concludes the proof.
\end{proof}
We now introduce the auxiliary variables used to control the number of missed rescue opportunities.
Let $\mathcal F_0$ be the trivial $\sigma$-algebra and let
\[
    \mathcal F_t
    =
    \sigma\lrb{Z_1,\ldots,Z_t},
    \qquad
    t\in[T].
\]
Let $O_t$ be the indicator that time $t$ is a rescue opportunity.
Since the reward sequence is fixed, the condition \eqref{eq:rescue-def-sums} is deterministic.
Whether the round starts in blind mode is determined by the past Bernoulli $Z_1,\ldots,Z_{t-1}$.
Therefore $O_t$ is $\mathcal F_{t-1}$-measurable for every $t\in[T]$.
Let $\rho_j$ be the time of the $j$-th rescue opportunity, i.e.,
\[
    \rho_j
    =
    \inf\lcb{
        t\in\lcb{1,\ldots,T}:
        \sum_{n=1}^{t}O_n\ge j
    },
\]
with the convention that $\rho_j=\infty$ if fewer than $j$ rescue opportunities occur.
Let $(\xi_j)_{j\ge1}$ be an auxiliary i.i.d.\ Bernoulli-$p$ sequence, independent of $Z_1,\ldots,Z_T$.
For every $j\ge1$, define
\[
    Y_j
    =
    \begin{cases}
        Z_{\rho_j}, & \text{if } \rho_j<\infty,\\
        \xi_j, & \text{if } \rho_j=\infty.
    \end{cases}
\]
Finally, define
\[
    G
    \coloneqq
    \inf\lcb{j\ge1:Y_j=1}-1,
\]
with the convention that $\inf\varnothing=\infty$.
Notice that, for every $m\ge1$,
\begin{equation}
\label{eq:G-tail-event}
    \lcb{G\ge m}
    =
    \lcb{Y_1=\cdots=Y_m=0}.
\end{equation}

\begin{lemma}[Missed rescue opportunities are dominated by the first successful rescue probe]
\label{lem:missed-rescue-domination}
For every fixed reward sequence,
\[
    Q\le G
\]
for every outcome.
\end{lemma}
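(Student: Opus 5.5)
The plan is to show that every rescue opportunity strictly before $\tau$ received a failed probe, and that these are exactly the first $Q$ rescue opportunities in time order. Then $Y_1=\cdots=Y_Q=0$, and \eqref{eq:G-tail-event} gives $G\ge Q$.

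\textbf{Rescue opportunities before $\tau$ come first.} The rescue opportunities are listed in increasing time order as $\rho_1<\rho_2<\cdots$. The $Q$ of them lying strictly before $\tau$ therefore form an initial segment, namely $\rho_1,\ldots,\rho_Q$, with $\rho_j<\tau\le T+1$ and in particular $\rho_j<\infty$ for $j\le Q$. By definition of $Y_j$, we then have $Y_j=Z_{\rho_j}$ for all $j\le Q$. If $Q=0$, there is nothing to prove, since $G\ge0$ always.

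\textbf{Each of them had a failed probe.} Fix $j\le Q$ and suppose, for contradiction, that $Z_{\rho_j}=1$. Since $\rho_j$ is a blind opportunity, the algorithm plays $r$ at $\rho_j$ and starts an inspection run there. Because $\rho_j$ is a rescue opportunity, \eqref{eq:rescue-def-sums} says the partial sums of this run stay negative up to $T$. By the equivalent description of the algorithm, this run is unfinished.

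This produces an unfinished inspection run starting at time $\rho_j<\tau$. But $\tau$ is the starting time of the \emph{first} unfinished run, or $T+1$ if there is none. Either way this is a contradiction, so $Z_{\rho_j}=0$ for all $j\le Q$.

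\textbf{Conclusion.} We have $Y_1=\cdots=Y_Q=0$, so by \eqref{eq:G-tail-event} the event $\{G\ge Q\}$ holds. Everything is pathwise, so $Q\le G$ for every outcome.

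\textbf{Main subtlety.} The one step needing care is confirming that a probe at a blind opportunity really starts a run whose completion is governed exactly by \eqref{eq:rescue-def-sums}. This holds because the run sum $C$ is initialized to $x_{\rho_j}$ and accumulated thereafter, so the algorithm's break-even test is precisely the condition in \eqref{eq:rescue-def-sums}. Independence and measurability of the $Y_j$ play no role here; they are only needed later to compute $\E\lsb{G}$.
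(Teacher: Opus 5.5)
Your proof is correct and follows the paper's argument. A probe $Z_t=1$ at a rescue opportunity $t<\tau$ would start an unfinished run before $\tau$, so every such $t$ has $Z_t=0$; these times are exactly $\rho_1,\ldots,\rho_Q$, hence $Y_1=\cdots=Y_Q=0$ and $G\ge Q$. The only cosmetic difference is that the paper states the last step as the event inclusion $\{Q\ge m\}\subseteq\{G\ge m\}$ for every $m\ge1$, while you apply it pathwise at $m=Q$ and treat $Q=0$ separately, which is equivalent.
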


\begin{proof}
Every rescue opportunity strictly before $\tau$ must have received a failed probing variable.
Indeed, let $t<\tau$ be a rescue opportunity.
If $Z_t=1$, then the algorithm starts an inspection run at round $t$.
By the definition of rescue opportunity, this inspection run is unfinished.
Since $t<\tau$, no unfinished inspection run has started before round $t$.
Therefore the first unfinished inspection run would start at $t$, contradicting the definition of $\tau$.
Hence
\begin{equation}
\label{eq:rescue-before-tau-missed}
    Z_t=0
    \qquad
    \text{for every rescue opportunity }t<\tau.
\end{equation}
Fix $m\ge1$.
On the event $\lcb{Q\ge m}$, the first $m$ rescue opportunities all occur strictly before $\tau$.
Equivalently,
\[
    \rho_1,\ldots,\rho_m<\tau.
\]
Since $\rho_1,\ldots,\rho_m$ are rescue opportunities, \eqref{eq:rescue-before-tau-missed} gives
\[
    Z_{\rho_1}=\cdots=Z_{\rho_m}=0.
\]
Moreover, $\rho_j<\infty$ for every $j\le m$, and therefore the definition of $Y_j$ gives
\[
    Y_j=Z_{\rho_j}
    \qquad
    \text{for every }j\le m.
\]
Consequently, using \eqref{eq:G-tail-event},
\[
    \lcb{Q\ge m}
    \subseteq
    \lcb{Y_1=\cdots=Y_m=0}
    =
    \lcb{G\ge m}.
\]
Thus
\[
    \lcb{Q\ge m}
    \subseteq
    \lcb{G\ge m}
    \qquad
    \text{for every }m\ge1.
\]
Since $Q$ is integer-valued and finite, this implies $Q\le G$ for every outcome.
\end{proof}

\begin{lemma}[The completed rescue-probe sequence is geometric]
\label{lem:first-successful-rescue-probe-geometric}
The sequence $(Y_j)_{j\ge1}$ is an i.i.d.\ sequence of Bernoulli-$p$ random variables.
In particular, $G$ is finite almost surely and has the geometric distribution on $\{0,1,\dots\}$ with parameter $p$, counting failures before the first success, and hence
\[
    \E\lsb{G}
    =
    \frac{1-p}{p}.
\]
\end{lemma}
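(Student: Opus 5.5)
We will show that for every $m\ge1$ and every $(y_1,\ldots,y_m)\in\{0,1\}^m$,
\[
    \Prob\lsb{Y_1=y_1,\ldots,Y_m=y_m}=\prod_{j=1}^m p^{y_j}(1-p)^{1-y_j}.
\]
This identifies the finite-dimensional distributions as those of an i.i.d.\ Bernoulli-$p$ sequence. The geometric law of $G$ then follows from \eqref{eq:G-tail-event}, since $\Prob[G\ge m]=(1-p)^m\to0$. This gives finiteness almost surely, and summing the tails gives $\E[G]=\sum_{m\ge1}(1-p)^m=(1-p)/p$.

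The heart of the argument is a sequential conditioning over both clocks. The variable $Y_j$ is drawn either from the real probe at a random time $\rho_j$ or from the auxiliary $\xi_j$. The key observation is that $\{\rho_j=t\}$ is $\mathcal F_{t-1}$-measurable, because it is determined by $O_1,\ldots,O_t$, each of which is $\mathcal F_{t-1}$-measurable. Hence $Z_t$ is independent of that event and of everything revealed before it.

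To make this clean, we enlarge the filtration to $\mathcal G_t=\sigma(Z_1,\ldots,Z_t,\xi_1,\xi_2,\ldots)$. Then $Z_t$ is independent of $\mathcal G_{t-1}$ and still Bernoulli-$p$. We prove the product formula by induction on $m$, conditioning on the value of $\rho_m$:
\[
    \Prob[Y_{1:m}=y_{1:m}]
    =\sum_{t=1}^T\Prob[Y_{1:m-1}=y_{1:m-1},\rho_m=t,Z_t=y_m]
    +\Prob[Y_{1:m-1}=y_{1:m-1},\rho_m=\infty,\xi_m=y_m].
\]
\begin{itemize}
    \item In the $t$-th term of the sum, the event $\{Y_{1:m-1}=y_{1:m-1},\rho_m=t\}$ lies in $\mathcal F_{t-1}$. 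On $\{\rho_m=t\}$ we have $\rho_j<t$ for $j<m$, so those $Y_j$ are earlier $Z$'s. This event is therefore independent of $Z_t$, and the term factors as $p^{y_m}(1-p)^{1-y_m}\Prob[\cdots,\rho_m=t]$.
    \item In the last term, the event $\{\rho_m=\infty\}$ is $\sigma(Z_{1:T})$-measurable. The variables $Y_1,\ldots,Y_{m-1}$ are functions of $Z_{1:T}$ and $\xi_1,\ldots,\xi_{m-1}$, so they are independent of $\xi_m$. This term factors the same way.
\end{itemize}
Summing, the factor $p^{y_m}(1-p)^{1-y_m}$ pulls out, leaving $\Prob[Y_{1:m-1}=y_{1:m-1}]$, and the induction hypothesis finishes the step.

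The main obstacle is this measurability bookkeeping in the first case. We must confirm that $\{Y_{1:m-1}=y_{1:m-1}\}\cap\{\rho_m=t\}$ involves no $\xi$'s and no $Z_s$ with $s\ge t$. This holds because $\rho_j<\rho_m=t<\infty$ forces $Y_j=Z_{\rho_j}$ with $\rho_j\le t-1$. If this step proves awkward, we will fall back on the enlarged filtration $\mathcal G$, where independence of $Z_t$ from $\mathcal G_{t-1}$ handles both cases uniformly.
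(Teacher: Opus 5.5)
Your proof is correct and follows essentially the same route as the paper. You partition on the value of $\rho_m$. In the finite case you use that $\{\rho_m=t\}$ and the earlier $Y_j$'s are $\mathcal F_{t-1}$-measurable, and in the case $\rho_m=\infty$ you use that $\xi_m$ is independent of $(Z_{1:T},\xi_{1:m-1})$; the paper phrases this with a bounded test function $f$ rather than by induction on product formulas. One inaccuracy: your fallback remark that the enlarged filtration $\mathcal G_t$ handles both cases uniformly is wrong, because $\xi_m$ is already $\mathcal G_0$-measurable, but your main argument never relies on it.
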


\begin{proof}
For every $i\ge1$ and every deterministic $u\in[T]$,
\[
    \lcb{\rho_i=u}
    =
    \lcb{
        \sum_{n=1}^{u-1}O_n=i-1,\,
        O_u=1
    }
    \in
    \mathcal F_{u-1}.
\]
In particular, $\lcb{\rho_i=u}\in\mathcal F_u$ and $\lcb{\rho_i=\infty}\in\mathcal F_T$.

Fix $j\ge1$.
We prove that $Y_j$ is Bernoulli-$p$ and independent of $Y_1,\ldots,Y_{j-1}$.
Let $f\colon\lcb{0,1}^{j-1}\to\mathbb R$ be any bounded function, and fix $a\in\lcb{0,1}$.
Set
\[
    q_a
    =
    \begin{cases}
        1-p, & a=0,\\
        p, & a=1.
    \end{cases}
\]
We prove that
\[
    \E\lsb{
        f(Y_1,\ldots,Y_{j-1})\mathbb I\lcb{Y_j=a}
    }
    =
    q_a
    \E\lsb{
        f(Y_1,\ldots,Y_{j-1})
    }.
\]

First fix a deterministic $t\in[T]$.
On the event $\lcb{\rho_j=t}$, the previous opportunity times $\rho_1,\ldots,\rho_{j-1}$ are all finite and strictly smaller than $t$.
For $i<j$, define
\[
    V_i^{(t)}
    =
    \sum_{u=1}^{t-1}
    Z_u\mathbb I\lcb{\rho_i=u}.
\]
Each $V_i^{(t)}$ is $\mathcal F_{t-1}$-measurable, because $\lcb{\rho_i=u}\in\mathcal F_{u-1}$ and $Z_u$ is $\mathcal F_u$-measurable, with $u\le t-1$.
Moreover, on $\lcb{\rho_j=t}$ we have
\[
    Y_i=V_i^{(t)}
    \qquad
    \text{for every }i<j.
\]
Hence
\[
    D_t
    \coloneqq
    \mathbb I\lcb{\rho_j=t}
    f\lrb{V_1^{(t)},\ldots,V_{j-1}^{(t)}}
\]
is $\mathcal F_{t-1}$-measurable, and
\[
    \mathbb I\lcb{\rho_j=t}
    f(Y_1,\ldots,Y_{j-1})
    =
    D_t.
\]
Since, on $\lcb{\rho_j=t}$, we have $Y_j=Z_t$, independence of $Z_t$ from $\mathcal F_{t-1}$ gives
\[
\begin{aligned}
    &\E\bsb{
        \mathbb I\lcb{\rho_j=t}
        f(Y_1,\ldots,Y_{j-1})
        \mathbb I\lcb{Y_j=a}
    }
    \\
    &\qquad =
    \E\bsb{
        D_t
        \mathbb I\lcb{Z_t=a}
    }
    \\
    &\qquad =
    \E\Bsb{
        D_t
        \E\bsb{
            \mathbb I\lcb{Z_t=a}
            \mid
            \mathcal F_{t-1}
        }
    }
    \\
    &\qquad =
    q_a
    \E\lsb{D_t}
    \\
    &\qquad =
    q_a
    \E\bsb{
        \mathbb I\lcb{\rho_j=t}
        f(Y_1,\ldots,Y_{j-1})
    }.
\end{aligned}
\]
It remains to consider the event $\lcb{\rho_j=\infty}$.
Let
\[
    \mathcal G_{j-1}
    =
    \sigma\lrb{
        Z_1,\ldots,Z_T,\xi_1,\ldots,\xi_{j-1}
    }.
\]
For every $i<j$,
\[
    Y_i
    =
    \sum_{u=1}^{T}
    Z_u\mathbb I\lcb{\rho_i=u}
    +
    \xi_i\mathbb I\lcb{\rho_i=\infty}.
\]
Therefore $Y_i$ is $\mathcal G_{j-1}$-measurable.
Also $\lcb{\rho_j=\infty}\in\mathcal F_T\subseteq\mathcal G_{j-1}$.
Thus
\[
    D_\infty
    \coloneqq
    \mathbb I\lcb{\rho_j=\infty}
    f(Y_1,\ldots,Y_{j-1})
\]
is $\mathcal G_{j-1}$-measurable.
Since, on $\lcb{\rho_j=\infty}$, we have $Y_j=\xi_j$, and since $\xi_j$ is independent of $\mathcal G_{j-1}$,
\[
\begin{aligned}
    &\E\bsb{
        \mathbb I\lcb{\rho_j=\infty}
        f(Y_1,\ldots,Y_{j-1})
        \mathbb I\lcb{Y_j=a}
    }
    \\
    &\qquad =
    \E\bsb{
        D_\infty
        \mathbb I\lcb{\xi_j=a}
    }
    \\
    &\qquad =
    q_a
    \E\lsb{D_\infty}
    \\
    &\qquad =
    q_a
    \E\bsb{
        \mathbb I\lcb{\rho_j=\infty}
        f(Y_1,\ldots,Y_{j-1})
    }.
\end{aligned}
\]

The events
\[
    \lcb{\rho_j=1},\ldots,\lcb{\rho_j=T},\lcb{\rho_j=\infty}
\]
are pairwise disjoint and form a partition of the sample space.
Therefore,
\[
\begin{aligned}
    &\E\bsb{
        f(Y_1,\ldots,Y_{j-1})\mathbb I\lcb{Y_j=a}
    }
    \\
    &\qquad =
    \sum_{t=1}^{T}
    \E\bsb{
        \mathbb I\lcb{\rho_j=t}
        f(Y_1,\ldots,Y_{j-1})
        \mathbb I\lcb{Y_j=a}
    }
    \\
    &\qquad\quad
    +
    \E\bsb{
        \mathbb I\lcb{\rho_j=\infty}
        f(Y_1,\ldots,Y_{j-1})
        \mathbb I\lcb{Y_j=a}
    }
    \\
    &\qquad =
    q_a
    \sum_{t=1}^{T}
    \E\bsb{
        \mathbb I\lcb{\rho_j=t}
        f(Y_1,\ldots,Y_{j-1})
    }
    \\
    &\qquad\quad
    +
    q_a
    \E\bsb{
        \mathbb I\lcb{\rho_j=\infty}
        f(Y_1,\ldots,Y_{j-1})
    }
    \\
    &\qquad =
    q_a
    \E\lsb{
        \lrb{
            \sum_{t=1}^{T}\mathbb I\lcb{\rho_j=t}
            +
            \mathbb I\lcb{\rho_j=\infty}
        }
        f(Y_1,\ldots,Y_{j-1})
    }
    \\
    &\qquad =
    q_a
    \E\lsb{
        f(Y_1,\ldots,Y_{j-1})
    }.
\end{aligned}
\]
Since this holds for every bounded $f$ and every $a\in\lcb{0,1}$, $Y_j$ is Bernoulli-$p$ and independent of $Y_1,\ldots,Y_{j-1}$.
Since $j\ge1$ was arbitrary, the sequence $(Y_j)_{j\ge1}$ is i.i.d.\ Bernoulli-$p$.

Finally,
\[
    G
    =
    \inf\lcb{j\ge1:Y_j=1}-1
\]
is finite almost surely and counts the number of failures before the first success in an i.i.d. Bernoulli-$p$ sequence.
Hence $G$ has the geometric distribution on $\{0,1,\dots\}$ with parameter $p$.
Equivalently,
\[
    \Prob\lrb{G=m}
    =
    (1-p)^m p,
    \qquad
    m\in\mathbb N_0.
\]
Thus
\[
    \E\lsb{G}
    =
    \sum_{m=0}^{\infty}m(1-p)^m p
    =
    \frac{1-p}{p}. \qedhere
\]
\end{proof}

\begin{lemma}[The rescue-opportunity lemma]
\label{lem:rescue-opportunity}
For every fixed reward sequence, the algorithm satisfies
\[
    \E\bsb{\pos{-s_{\tau-1}}}
    \le
    \frac1p.
\]
\end{lemma}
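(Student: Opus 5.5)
The proof is a direct chaining of the three preceding lemmas, so the plan is to combine them pathwise and then take expectations. First, by Lemma~\ref{lem:large-blind-loss-rescue-count}, for every outcome of the probing variables we have the deterministic bound
\[
    \pos{-s_{\tau-1}}
    \le
    Q+1 .
\]
Second, by Lemma~\ref{lem:missed-rescue-domination}, $Q\le G$ holds for every outcome on the enlarged probability space carrying both $Z_1,\dots,Z_T$ and the auxiliary sequence $(\xi_j)_{j\ge1}$. Combining the two gives the pathwise inequality $\pos{-s_{\tau-1}}\le G+1$.

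Next I will take expectations. Both sides are nonnegative random variables; the left-hand side is in fact bounded by $T$, since $|s_{\tau-1}|\le T$. Monotonicity of expectation therefore applies without any integrability concern, and yields
\[
    \E\bsb{\pos{-s_{\tau-1}}}
    \le
    \E\lsb{G}+1 .
\]
Here the expectation on the left is the same whether it is computed on the original space or on the product space with the $\xi_j$. This holds because $\pos{-s_{\tau-1}}$ is a function of $Z_1,\dots,Z_T$ only, and the $\xi_j$ are independent of them. It is worth stating this explicitly, since the $\xi_j$ are merely a proof device.

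Finally, Lemma~\ref{lem:first-successful-rescue-probe-geometric} gives $\E[G]=(1-p)/p$. Hence
\[
    \E\bsb{\pos{-s_{\tau-1}}}
    \le
    \frac{1-p}{p}+1
    =
    \frac1p .
\]
There is no real obstacle remaining, because the substantive work (the last-downcrossing injection and the optional-skipping argument establishing the i.i.d.\ structure of $(Y_j)$) was done in the earlier lemmas. The only point needing care is the bookkeeping of the probability space in the expectation step above.
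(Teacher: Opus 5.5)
Your proposal is correct and matches the paper's proof: you chain Lemma~\ref{lem:large-blind-loss-rescue-count} and Lemma~\ref{lem:missed-rescue-domination} to get $\pos{-s_{\tau-1}}\le Q+1\le G+1$ pathwise, take expectations, and use $\E\lsb{G}=(1-p)/p$ from Lemma~\ref{lem:first-successful-rescue-probe-geometric}. Your extra remarks are valid bookkeeping that the paper leaves implicit: the left-hand side is bounded by $T$, and it does not depend on the auxiliary $\xi_j$.
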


\begin{proof}
By Lemma~\ref{lem:large-blind-loss-rescue-count} and Lemma~\ref{lem:missed-rescue-domination}, we have
\[
    \pos{-s_{\tau-1}}
    \le
    Q+1
    \le
    G+1.
\]
Taking expectations and using Lemma~\ref{lem:first-successful-rescue-probe-geometric}, we get
\[
    \E\bsb{\pos{-s_{\tau-1}}}
    \le
    \E\lsb{G}+1
    =
    \frac{1-p}{p}+1
    =
    \frac1p. \qedhere
\]
\end{proof}

\section{Upper bound}
\label{sec:upper-bound}

We now prove the regret guarantee of Algorithm~\ref{alg:geometric-break-even-zero}.
The proof is modular: we first count inspection runs and switches, then upper bound the signed inspection term when the blind action is optimal, and finally upper bound the signed blind-play term when the revealing action is optimal.

\begin{lemma}[Excursions and switches]
\label{lem:excursions-switches}
Let $M$ be the total number of inspection runs started by the algorithm, and let $K$ be the number of completed inspection runs.
Then
\[
    \E\lsb{K}
\le
    \E\lsb{M}
\le 
    pT.
\]
Hence,
\[
    R_{\mathrm{sw}}\le 2pT.
\]
\end{lemma}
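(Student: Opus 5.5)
The argument has three steps: a deterministic comparison $K\le M$, a bound on $\E[M]$ by counting probes at blind opportunities, and a deterministic bound of $2M$ on the number of switches.

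\emph{Step 1: $K\le M$.} Every completed inspection run is, in particular, a started run. Hence $K\le M$ on every outcome, and taking expectations gives $\E[K]\le\E[M]$.

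\emph{Step 2: $\E[M]\le pT$.} By the algorithm, an inspection run starts at round $t$ exactly when $t$ is a blind opportunity and $Z_t=1$. Let $W_t$ be the indicator that round $t$ begins in blind mode. Then
\[
    M=\sum_{t=1}^{T} W_t Z_t .
\]
The indicator $W_t$ depends only on the fixed reward sequence and on $Z_1,\ldots,Z_{t-1}$, so it is $\mathcal F_{t-1}$-measurable. The variable $Z_t$ is independent of $\mathcal F_{t-1}$, so
\[
    \E[W_tZ_t]=p\,\E[W_t]\le p .
\]
Summing over $t$ gives $\E[M]\le pT$. This is the one step that needs care, since $W_t$ and $Z_t$ appear in the same product; the measurability structure is the same one already used in Lemma~\ref{lem:first-successful-rescue-probe-geometric}.

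\emph{Step 3: $N_T\le 2M$ on every outcome.} The algorithm starts in blind mode, so any switch occurring at a time when $r$ is played must come from a run start. More precisely, consider a time $t$ with $A_t\ne A_{t+1}$. There are two cases.
\begin{enumerate}[label=(\roman*)]
    \item If $A_t=b$ and $A_{t+1}=r$, then round $t+1$ is a blind opportunity with $Z_{t+1}=1$. Indeed, after playing $b$ the mode remains blind, and in blind mode $r$ is played only when a run starts. So $t+1$ is the start of an inspection run.
    \item If $A_t=r$ and $A_{t+1}=b$, then round $t+1$ starts in blind mode. Since $r$ was played at round $t$, some inspection run was in progress and completed exactly at round $t$.
\end{enumerate}
Case (i) is charged to the run starting at $t+1$. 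Case (ii) is charged to the run completed at $t$, which is a started run. Each run receives at most one charge of each type, because it has a unique start and at most one completion. Therefore $N_T\le 2M$.

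Taking expectations and using Step 2,
\[
    R_{\mathrm{sw}}=\E[N_T]\le 2\,\E[M]\le 2pT .
\]
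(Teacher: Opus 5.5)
Your proof is correct and follows the paper's argument: you write $M=\sum_t W_tZ_t$ with $W_t$ being $\mathcal F_{t-1}$-measurable and $Z_t$ independent of $\mathcal F_{t-1}$, note that $K\le M$ holds trivially, and get $N_T\le 2M$ by charging each switch to a run start or a run completion. Your Step 3 case analysis makes explicit the paper's one-line remark that each run creates at most one switch in each direction, and it in fact yields the slightly sharper bound $N_T\le M+K$.
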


\begin{proof}
An inspection run can start only at a blind opportunity $t$ at which $Z_t = 1$.
Let $I_t$ be the indicator of the event that the algorithm is in blind mode just before the Bernoulli at time $t$ is drawn.
Then
\[
    M
    =
    \sum_{t=1}^{T}I_tZ_t.
\]
Since $I_t$ is determined by the past before the time-$t$ Bernoulli is drawn, while $Z_t$ is fresh,
\[
    \E\lsb{I_tZ_t}
    =
    p\Prob\lrb{I_t=1}.
\]
Therefore
\[
    \E\lsb{M}
    =
    p\sum_{t=1}^{T}\Prob\lrb{I_t=1}
    \le
    pT.
\]
Since every completed inspection run is an inspection run, $\E\lsb{K}\le\E\lsb{M}\le pT$.
Each inspection run creates at most one switch from $b$ to $r$ and at most one switch from $r$ to $b$.
Thus, pathwise, $N_T\le2M$.
Taking expectations gives
\[
    R_{\mathrm{sw}}
    =
    \E\lsb{N_T}
    \le
    2\E\lsb{M}
    \le
    2pT. \qedhere
\]
\end{proof}

\begin{lemma}[Signed inspection term when the blind action is optimal]
\label{lem:inspection-contribution}
If $s_T\ge0$, then
\[
    R_{\mathrm{insp}}
    \le
    pT.
\]
Consequently,
\[
    R_T
    \le
    3pT.
\]
\end{lemma}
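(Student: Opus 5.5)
The plan is to decompose the set of rounds where $r$ is played, pathwise, into inspection runs. Every round with $A_t=r$ belongs to exactly one inspection run, since $r$ is played only at the start of a run (a blind opportunity with $Z_t=1$) or during inspection mode. Each run is an interval $[a,c]$ of consecutive rounds, and these intervals are pairwise disjoint. Hence
\[
    \sum_{t:A_t=r}x_t=\sum_{\text{runs }[a,c]}\sum_{n=a}^{c}x_n .
\]

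I will then bound each run's contribution separately. For a completed run, $c$ is the first time the partial sum $\sum_{n=a}^{u}x_n$ is nonnegative. If $c=a$, then the run sum is $x_a\le1$. If $c>a$, then $\sum_{n=a}^{c-1}x_n<0$ and $x_c\le 1$, so the total is strictly less than $1$. In either case each completed run contributes at most $1$.

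There is at most one unfinished run, because once it starts the algorithm never returns to blind mode. Its sum is $\sum_{n=a}^{T}x_n<0$ by definition, so it contributes a negative amount and can be dropped. Pathwise this gives $\sum_{t:A_t=r}x_t\le K$. Taking expectations and using Lemma~\ref{lem:excursions-switches}, I get $R_{\mathrm{insp}}\le\E[K]\le pT$.

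For the consequence, I use the decomposition \eqref{eq:regret-decomposition-positive}, which applies since $s_T\ge0$, so that $R_T=R_{\mathrm{insp}}+R_{\mathrm{sw}}$. Lemma~\ref{lem:excursions-switches} gives $R_{\mathrm{sw}}\le 2pT$, and therefore $R_T\le 3pT$.

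I do not expect a real obstacle here. The only care needed is in the pathwise decomposition: making sure runs partition the revealing rounds, handling the edge case of a run that completes in its very first round, and noting that the unfinished run, which is possibly truncated by the horizon, has a negative sum by definition. Notably, the hypothesis $s_T\ge0$ is used only to select the regret decomposition and not in bounding $R_{\mathrm{insp}}$.
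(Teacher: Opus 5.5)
Your proposal is correct and follows essentially the same argument as the paper: you bound each completed run's sum by $1$ (treating the one-round case separately), drop the unfinished run because its sum is negative, obtain $\sum_{t:A_t=r}x_t\le K$ pathwise, and then invoke Lemma~\ref{lem:excursions-switches} for both $\E\lsb{K}\le pT$ and $R_{\mathrm{sw}}\le 2pT$. Your explicit check that the revealing rounds split into disjoint inspection runs spells out a step the paper leaves implicit.
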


\begin{proof}
If $s_T\ge0$, the blind action is the best fixed action, and by \eqref{eq:regret-decomposition-positive},
\[
    R_T
    =
    R_{\mathrm{insp}}+R_{\mathrm{sw}}.
\]
Every completed inspection run has total cumulative observed reward at least $0$, because it is completed only after reaching level $0$.
It has total cumulative observed reward at most $1$.
Indeed, if the inspection run is completed in one round, its total reward is the first observed reward and lies in $[0,1]$.
If it is completed after more than one round, then by minimality of the completion time the cumulative sum just before the last inspection round was strictly negative; adding one more reward, which is at most $1$, gives a final cumulative reward at most $1$.
The final unfinished inspection run, if it exists, has total cumulative reward strictly smaller than $0$, and therefore cannot increase the signed inspection term.
Hence, pathwise,
\[
    \sum_{t:A_t=r}x_t
    \le
    K.
\]
Taking expectations and applying Lemma~\ref{lem:excursions-switches},
\[
    R_{\mathrm{insp}}
    \le
    \E\lsb{K}
    \le
    pT.
\]
Together with $R_{\mathrm{sw}}\le2pT$, this gives $R_T\le3pT$.
\end{proof}

\begin{lemma}[Signed blind-play term when the revealing action is optimal]
\label{lem:blind-contribution}
If $s_T<0$, then
\[
    R_{\mathrm{blind}}
    \le
    \frac1p+pT.
\]
Consequently,
\[
    R_T
    \le
    \frac1p+3pT.
\]
\end{lemma}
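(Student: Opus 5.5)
The plan is to prove the pathwise inequality
\[
    -\sum_{t:A_t=b}x_t \le \pos{-s_{\tau-1}} + K,
\]
then take expectations and invoke Lemma~\ref{lem:rescue-opportunity} for the first term and Lemma~\ref{lem:excursions-switches} for the second. This gives $R_{\mathrm{blind}}\le 1/p+pT$. Adding $R_{\mathrm{sw}}\le 2pT$ through the decomposition \eqref{eq:regret-decomposition-negative} then yields $R_T\le 1/p+3pT$.

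\textbf{Step 1: all blind plays occur before $\tau$.} If $\tau\le T$, round $\tau$ starts an unfinished inspection run, so the algorithm plays $r$ at every round in $\lcb{\tau,\ldots,T}$. If $\tau=T+1$, there is nothing to check. Hence the blind reward satisfies $B=\sum_{t<\tau,\,A_t=b}x_t$.

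\textbf{Step 2: partition the rounds before $\tau$.} The rounds in $\lcb{1,\ldots,\tau-1}$ split into blind-play rounds and the rounds of inspection runs. Every inspection run started before $\tau$ is completed by definition of $\tau$. Such a run starts at some $a<\tau$ and completes at some $c$. We need $c<\tau$: if a run were still ongoing at time $\tau$, the algorithm would be in inspection mode at $\tau$, so no new run could start there. Therefore $s_{\tau-1}=B+\sum_{\text{completed runs}}S_j$, where $S_j$ is the cumulative reward of the $j$-th completed run. As shown in the proof of Lemma~\ref{lem:inspection-contribution}, each $S_j\in[0,1]$. Counting only the runs completed before $\tau$ still gives at most $K$ of them.

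\textbf{Step 3: conclude the pathwise bound.} From Step 2, $B=s_{\tau-1}-\sum_j S_j\ge s_{\tau-1}-K$, so
\[
    -B\le -s_{\tau-1}+K\le \pos{-s_{\tau-1}}+K.
\]
The hypothesis $s_T<0$ is used only to select the decomposition \eqref{eq:regret-decomposition-negative}. Taking expectations finishes the proof.

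The only delicate point is the bookkeeping in Step 2, namely that runs started before $\tau$ also finish before $\tau$. This follows because a new inspection run can begin only at a blind opportunity. Everything else is direct.
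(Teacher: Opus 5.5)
Your proposal is correct and follows essentially the same route as the paper: the pathwise bound $-\sum_{t:A_t=b}x_t\le\pos{-s_{\tau-1}}+K$ comes from splitting $s_{\tau-1}$ into the blind-play rewards plus the completed-run sums in $[0,1]$, and it is then combined with Lemmas~\ref{lem:rescue-opportunity} and~\ref{lem:excursions-switches} through \eqref{eq:regret-decomposition-negative}. Your explicit check that runs started before $\tau$ also finish before $\tau$ spells out bookkeeping the paper leaves implicit, and you handle it correctly.
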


\begin{proof}
If $s_T<0$, the revealing action is the best fixed action, and by \eqref{eq:regret-decomposition-negative},
\[
    R_T
    =
    R_{\mathrm{blind}}+R_{\mathrm{sw}}.
\]
Let $B$ be the set of blind-played times before $\tau$.
If $\tau\le T$, then from time $\tau$ onward the algorithm is in the first unfinished inspection run and never returns to blind mode.
If $\tau=T+1$, then ``before $\tau$'' means the whole horizon.
Thus $B$ is exactly the set of all times at which the algorithm plays $b$.
Since $\tau$ is the starting time of the first unfinished inspection run, and an unfinished inspection run never returns to blind mode, every completed inspection run occurs before $\tau$.
Let $C_1,\ldots,C_K$ be the cumulative rewards of these completed inspection runs.
For every such inspection run, $0\le C_j\le1$.
The cumulative reward before $\tau$ of the blind action decomposes as
\[
    s_{\tau-1}
    =
    \sum_{t\in B}x_t
    +
    \sum_{j=1}^{K}C_j.
\]
Rearranging and using $C_j\le1$ gives, pathwise,
\[
    -
    \sum_{t:A_t=b}x_t
    =
    -
    \sum_{t\in B}x_t
    =
    -s_{\tau-1}
    +
    \sum_{j=1}^{K}C_j
    \le
    \pos{-s_{\tau-1}}
    +
    K.
\]
Taking expectations and applying Lemma~\ref{lem:rescue-opportunity} and Lemma~\ref{lem:excursions-switches},
\[
    R_{\mathrm{blind}}
    \le
    \E\lsb{\pos{-s_{\tau-1}}}+\E\lsb{K}
    \le
    \frac1p+pT.
\]
Together with $R_{\mathrm{sw}}\le2pT$, this gives
\[
    R_T
    \le
    \frac1p+pT+2pT
    =
    \frac1p+3pT. \qedhere
\]
\end{proof}

\begin{theorem}[Oblivious upper bound]
\label{thm:upper-bound}
For every fixed sequence $x_1,\ldots,x_T\in[-1,1]$, Algorithm~\ref{alg:geometric-break-even-zero} with parameter $p\in(0,1]$ satisfies
\[
    R_T
    \le
    \frac{1}{p}+3pT.
\]
Consequently, choosing $p=1/\sqrt{3T}$ gives
\[
    R_T
    \le
    2\sqrt{3} \cdot \sqrt{T}.
\]
\end{theorem}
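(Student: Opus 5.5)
The theorem follows by combining the two case lemmas already established, since the sign of $s_T$ is deterministic for a fixed sequence. I will fix $x_1,\ldots,x_T\in[-1,1]$ and $p\in(0,1]$, and split according to whether $s_T\ge0$ or $s_T<0$.

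\emph{Case $s_T\ge0$.} Here Lemma~\ref{lem:inspection-contribution} gives $R_T\le 3pT$. Since $1/p>0$, this is at most $\frac1p+3pT$.

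\emph{Case $s_T<0$.} Here Lemma~\ref{lem:blind-contribution} directly gives $R_T\le \frac1p+3pT$.

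So in both cases
\[
    R_T\le \frac1p+3pT .
\]
All the real work (the rescue-opportunity lemma and the switch count) lives in the earlier lemmas. The only point to check here is that the two cases cover every sequence, which they do.

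For the second claim, I will substitute $p=1/\sqrt{3T}$, which minimizes $\frac1p+3pT$. First I must confirm that $p\in(0,1]$. This holds because $T\ge1$ gives $\sqrt{3T}\ge\sqrt3>1$. Then
\[
    \frac1p=\sqrt{3T}
    \qquad\text{and}\qquad
    3pT=\frac{3T}{\sqrt{3T}}=\sqrt{3T},
\]
so the sum is $2\sqrt{3T}=2\sqrt3\cdot\sqrt T$. There is no real obstacle in this step.
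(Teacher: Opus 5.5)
Your proposal is correct and follows the paper's proof exactly: you split on the sign of $s_T$, invoke Lemma~\ref{lem:inspection-contribution} (with $3pT\le \frac1p+3pT$) or Lemma~\ref{lem:blind-contribution}, and substitute $p=1/\sqrt{3T}$. Your explicit check that $p=1/\sqrt{3T}\in(0,1]$ for $T\ge1$ is a small detail that the paper leaves implicit.
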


\begin{proof}
If $s_T\ge0$, the bound follows immediately from Lemma~\ref{lem:inspection-contribution}.
If instead $s_T<0$, the bound is exactly the conclusion of Lemma~\ref{lem:blind-contribution}.
For $p=1/\sqrt{3T}$,
\[
    \frac1p+3pT
    =
    \sqrt{3T}+\sqrt{3T}
    =
    2\sqrt{3} \cdot \sqrt{T}. \qedhere
\]
\end{proof}
\section{Lower bound}
\label{sec:lower-bound}

The lower bound is folklore and we report here the proof for completeness.
We remark that the lower bound does not use switching costs: it already holds in the easier setting where switching is free.
Draw the hidden rewards as independent fair signs before the game starts.
At each time the learner chooses before seeing the current sign, so playing blind cannot have positive expected correlation with that sign.
The comparator, however, is chosen in hindsight: it benefits from the final imbalance of the random walk, whose expected size is of order $\sqrt T$.

\begin{theorem}[Oblivious lower bound]
\label{thm:lower-bound}
For the two-action apple-tasting problem with switching costs, every learner has worst-case expected regret at least
\[
    \frac{1}{2\sqrt{3}} \cdot\sqrt T .
\]
\end{theorem}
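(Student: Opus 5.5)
We will use Yao's principle. Let $X_1,\ldots,X_T$ be i.i.d.\ Rademacher signs, independent of the learner's internal randomization. For every learner $\pi$,
\[
\sup_{x_{1:T}} R_T^\pi(x_{1:T}) \ge \E\lsb{R_T^\pi(X_{1:T})},
\]
so it suffices to lower bound the right-hand side. Since $N_T\ge0$, we drop the switching cost and bound
\[
\E\lsb{\max\lcb{0,S_T}} - \E\lsb{\sum_{t:A_t=b}X_t},
\qquad
S_T=\sum_{t=1}^T X_t .
\]

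\emph{Step 1: the learner's reward term vanishes.} The action $A_t$ is a function of the learner's randomization and of $X_1,\ldots,X_{t-1}$. Hence $\mathbb I\lcb{A_t=b}$ is independent of $X_t$, and
\[
\E\lsb{X_t\mathbb I\lcb{A_t=b}}=\E\lsb{X_t}\Prob(A_t=b)=0 .
\]
This holds regardless of what the learner observes. Summing over $t$ shows the learner's expected reward is $0$.

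\emph{Step 2: lower bound $\E\lsb{\max\lcb{0,S_T}}$.} By symmetry, $\E\lsb{\max\lcb{0,S_T}}=\tfrac12\E\labs{S_T}$. To keep the constant explicit and nonasymptotic, we use the fourth-moment form of Hölder's inequality (Paley--Zygmund style):
\[
\E\lsb{S_T^2}\le \bigl(\E\labs{S_T}\bigr)^{2/3}\bigl(\E\lsb{S_T^4}\bigr)^{1/3}.
\]
We have $\E\lsb{S_T^2}=T$ and $\E\lsb{S_T^4}=3T^2-2T\le 3T^2$. Therefore
\[
\E\labs{S_T}\ge \sqrt{T^3/(3T^2)}=\sqrt{T/3},
\]
which gives
\[
\E\lsb{\max\lcb{0,S_T}}\ge \tfrac12\sqrt{T/3}=\tfrac{1}{2\sqrt3}\sqrt T .
\]

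\emph{Step 3: transfer to the original problem.} The normalized formulation of \Cref{par:a_useful_reformulation} is equivalent to the original apple-tasting problem, so the bound transfers. This yields the claimed lower bound on the worst-case regret, and hence on $R_T^\star$.

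The only delicate point is extracting the exact constant $1/(2\sqrt3)$; the fourth-moment Hölder argument delivers it directly. Step 1 is routine measurability.
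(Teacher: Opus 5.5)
Your proposal is correct and follows essentially the same route as the paper: Rademacher signs, independence of $A_t$ from $X_t$ to make the learner's expected blind reward vanish, dropping the nonnegative switching cost, symmetry to reduce to $\tfrac12\E\labs{S_T}$, and H\"older with exponents $3/2$ and $3$ together with $\E\lsb{S_T^4}=3T^2-2T\le 3T^2$. The only cosmetic difference is that you invoke $\sup_{x_{1:T}}R_T^\pi(x_{1:T})\ge\E\lsb{R_T^\pi(X_{1:T})}$ directly, whereas the paper writes out the average over $\{-1,1\}^T$ to extract a deterministic bad sequence.
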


\begin{proof}
Fix an arbitrary learning algorithm.
We construct an oblivious stochastic distribution over deterministic reward sequences.

Let $X_1,\ldots,X_T$ be independent Rademacher random variables satisfying
\[
    \Prob\lrb{X_t=1}=\frac{1}{2}=\Prob\lrb{X_t=-1}.
\]
The random sequence $X_{1:T}$ is drawn before the game starts. We will show that
\[
    \E\bsb{R_T(X_{1:T})}
    \ge
    \frac{1}{2\sqrt{3}} \cdot \sqrt T .
\]

Let
\[
    S_T \coloneqq \sum_{t=1}^T X_t .
\]
At time $t$, the learner chooses $A_t$ before observing $X_t$. Let $\mathcal H_t$ be the $\sigma$-algebra containing the learner's internal randomization and all information available before choosing $A_t$. Then $A_t$ is $\mathcal H_t$-measurable, whereas $X_t$ is independent of $\mathcal H_t$ and has mean zero. Hence
\[
    \E\bsb{
        \mathbb{I}\lcb{A_t=b}X_t
        \mid
        \mathcal H_t
    }
    =
    \mathbb{I}\lcb{A_t=b}
    \E\bsb{
        X_t
        \mid
        \mathcal H_t
    }
    =
    0 .
\]
Taking expectations and summing over $t$ gives
\[
    \E\lsb{
        \sum_{t:A_t=b}X_t
    }
    =
    0 .
\]

Switching costs are nonnegative, so dropping them can only weaken the lower bound. Therefore,
\begin{align*}
    \E\bsb{R_T(X_{1:T})}
=
    \E\lsb{
        \max\lcb{0,S_T}
        -
        \sum_{t:A_t=b}X_t
        +
        N_T
    }
\ge
    \E\bsb{
        \max\lcb{0,S_T}
    } .
\end{align*}
The random variable $S_T$ has a distribution symmetric around zero. Consequently,
\[
    \E\bsb{
        \max\lcb{0,S_T}
    }
    =
    \frac{1}{2}
    \E\bsb{
        \labs{S_T}
    } .
\]

It remains to lower-bound $\E\bsb{\labs{S_T}}$.
Since the $X_1,\dots,X_T$ are independent, centered, and have variance one, it holds that
\[
    \E\lsb{S_T^2}=T .
\]
Moreover, expanding $S_T^4$, all monomials containing some $X_t$ to an odd power have expectation zero. The only surviving terms are the $T$ terms $X_t^4$ and the $6 \cdot \binom{T}{2}$ terms $X_i^2X_j^2$ with $i<j$. Thus
\[
    \E\lsb{S_T^4}
    =
    T+6 \cdot \binom{T}{2}
    =
    3T^2-2T
    \le
    3T^2 .
\]
We now use the elementary interpolation inequality
\begin{equation}
\label{eq:interpolation}
    \E\lsb{\labs{Y}}
    \ge
    \frac{
        \brb{\E\lsb{Y^2}}^{3/2}
    }{
        \brb{\E\lsb{Y^4}}^{1/2}
    }
\end{equation}
which holds for every random variable $Y$ with finite non-zero fourth moment by H\"older's inequality with exponents $3/2$ and $3$:
\[
    \E\lsb{Y^2}
    =
    \E\lsb{
        \labs{Y}^{2/3}
        \labs{Y}^{4/3}
    }
    \le
    \brb{
        \E\lsb{\labs{Y}}
    }^{2/3}
    \brb{
        \E\lsb{Y^4}
    }^{1/3}.
\]
Applying \eqref{eq:interpolation} to $Y=S_T$ yields
\[
    \E\bsb{\labs{S_T}}
    \ge
    \frac{T^{3/2}}{\sqrt{3T^2}}
    =
    \frac{1}{\sqrt 3}\sqrt T .
\]
Therefore,
\[
    \E\bsb{R_T(X_{1:T})}
    \ge
    \frac{1}{2}
    \E\bsb{\labs{S_T}}
    \ge
    \frac{1}{2\sqrt 3}\sqrt T .
\]

Finally, since $X_{1:T}$ is uniformly distributed on the finite set $\{-1,1\}^T$, we have
\begin{equation}
\label{eq:average}    
    \E\bsb{R_T(X_{1:T})}
    =
    \frac{1}{2^{T}}
    \sum_{x_{1:T}\in\{-1,1\}^T}
    R_T(x_{1:T}) .
\end{equation}

Thus, if every deterministic sequence $x_{1:T}\in\{-1,1\}^T$ had regret strictly smaller than
\[
    \frac{1}{2\sqrt 3}\cdot\sqrt T ,
\]
then the average in \eqref{eq:average} would also be strictly smaller than this quantity, a contradiction.
Hence there exists a deterministic sequence $x^\star_{1:T}\in\{-1,1\}^T$ such that
\[
    R_T(x^\star_{1:T})
    \ge
    \frac{1}{2\sqrt 3}\cdot\sqrt T .
\]
Since the learning algorithm was arbitrary, the minimax regret satisfies
\[
    R_T^\star
    \ge
    \frac{1}{2\sqrt 3}\cdot\sqrt T . \qedhere
\]
\end{proof}

\section{Oblivious minimax rate}
\label{sec:oblivious-minimax}

Combining the upper and lower bounds gives the exact order of the minimax regret.

\begin{theorem}[Minimax expected regret]
\label{thm:minimax-rate}
For two-action apple tasting with unit switching costs and an oblivious adversary, the minimax expected regret satisfies
\[
    R_T^{\star}
    =
    \Theta\lrb{\sqrt T}.
\]
More explicitly,
\begin{equation}
\label{eq:minimax-explicit-constants}
    \frac{1}{2\sqrt{3}} \cdot\sqrt T
    \le
    R_T^{\star}
    \le
    2\sqrt{3} \cdot \sqrt{T}.
\end{equation}
\end{theorem}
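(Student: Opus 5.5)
The plan is to read off both inequalities in \eqref{eq:minimax-explicit-constants} from the two preceding theorems; the $\Theta(\sqrt T)$ statement then follows at once.

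\textbf{Upper bound.} Consider the particular learner $\pi_0$ given by Algorithm~\ref{alg:geometric-break-even-zero} with $p=1/\sqrt{3T}$. Since $T\ge1$, this choice satisfies $p\in(0,1]$, so Theorem~\ref{thm:upper-bound} applies. It gives $R_T^{\pi_0}(x_{1:T})\le 2\sqrt3\cdot\sqrt T$ for every fixed sequence $x_{1:T}\in[-1,1]^T$. Taking the supremum over sequences, and then noting that the infimum over learners is at most the value achieved by $\pi_0$, we get $R_T^\star\le 2\sqrt3\cdot\sqrt T$.

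\textbf{Lower bound.} Theorem~\ref{thm:lower-bound} shows that for an arbitrary learner $\pi$ there is a deterministic sequence $x^\star_{1:T}\in\{-1,1\}^T\subseteq[-1,1]^T$ with $R_T^\pi(x^\star_{1:T})\ge \sqrt T/(2\sqrt3)$. Hence $\sup_{x_{1:T}}R_T^\pi(x_{1:T})\ge \sqrt T/(2\sqrt3)$ for every $\pi$, and taking the infimum over $\pi$ gives $R_T^\star\ge \sqrt T/(2\sqrt3)$.

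Together the two bounds give \eqref{eq:minimax-explicit-constants}, and since both constants are absolute, $R_T^\star=\Theta(\sqrt T)$. The only delicate point is the order of quantifiers: the lower bound has to hold for every learner, with a sequence that may depend on that learner. The averaging step \eqref{eq:average} in the proof of Theorem~\ref{thm:lower-bound} supplies exactly this, so there is no real obstacle.
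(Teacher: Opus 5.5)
Your proposal is correct and follows the paper's own proof exactly: the upper bound comes from Theorem~\ref{thm:upper-bound} with $p=1/\sqrt{3T}$, and the lower bound from Theorem~\ref{thm:lower-bound}. Your extra checks are accurate and only make explicit what the paper leaves implicit: that $p\in(0,1]$, and that the quantifier order ($\inf_\pi\sup_{x_{1:T}}$, with the hard sequence allowed to depend on the learner) is respected.
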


\begin{proof}
The upper bound is Theorem~\ref{thm:upper-bound}.
The lower bound is Theorem~\ref{thm:lower-bound}.
\end{proof}

\section*{Acknowledgments}

TC gratefully acknowledges the support of the Natural Sciences and Engineering Research Council of Canada (NSERC) through grant RGPIN-2023-03688 (Discovery Grants Program).

\bibliographystyle{plainnat}
\bibliography{apple_tasting_switching_costs_oblivious}

\begin{thebibliography}{10}
\providecommand{\natexlab}[1]{#1}
\providecommand{\url}[1]{\texttt{#1}}
\expandafter\ifx\csname urlstyle\endcsname\relax
  \providecommand{\doi}[1]{doi: #1}\else
  \providecommand{\doi}{doi: \begingroup \urlstyle{rm}\Url}\fi

\bibitem[Alon et~al.(2015)Alon, Cesa-Bianchi, Dekel, and Koren]{AlonCesaBianchiDekelKoren2015}
Noga Alon, Nicolo Cesa-Bianchi, Ofer Dekel, and Tomer Koren.
\newblock Online learning with feedback graphs: Beyond bandits.
\newblock In \emph{Conference on Learning Theory}, pages 23--35. PMLR, 2015.

\bibitem[Arora et~al.(2019)Arora, Marinov, and Mohri]{AroraMarinovMohri2019}
Raman Arora, Teodor~Vanislavov Marinov, and Mehryar Mohri.
\newblock Bandits with feedback graphs and switching costs.
\newblock \emph{Advances in Neural Information Processing Systems}, 32, 2019.

\bibitem[Cesa-Bianchi et~al.(2006)Cesa-Bianchi, Lugosi, and Stoltz]{CesaBianchiLugosiStoltz2006}
Nicol{\`o} Cesa-Bianchi, G{\'a}bor Lugosi, and Gilles Stoltz.
\newblock Regret minimization under partial monitoring.
\newblock \emph{Mathematics of Operations Research}, 31\penalty0 (3):\penalty0 562--580, 2006.

\bibitem[Dekel et~al.(2014)Dekel, Ding, Koren, and Peres]{DekelDingKorenPeres2014}
Ofer Dekel, Jian Ding, Tomer Koren, and Yuval Peres.
\newblock Bandits with switching costs: {$T^{2/3}$} regret.
\newblock In \emph{Proceedings of the forty-sixth annual ACM symposium on Theory of computing}, pages 459--467, 2014.

\bibitem[Geulen et~al.(2010)Geulen, V{\"o}cking, and Winkler]{GeulenVoeckingWinkler2010}
Sascha Geulen, Berthold V{\"o}cking, and Melanie Winkler.
\newblock Regret minimization for online buffering problems using the weighted majority algorithm.
\newblock In \emph{Proceedings of the 23rd Annual Conference on Learning Theory}, pages 132--143, 2010.

\bibitem[Helmbold et~al.(1992)Helmbold, Littlestone, and Long]{HelmboldLL1992}
David~P. Helmbold, Nick Littlestone, and Philip~M. Long.
\newblock Apple tasting and nearly one-sided learning.
\newblock In \emph{Proceedings of the 33rd Annual Symposium on Foundations of Computer Science}, pages 493--502. IEEE Computer Society, 1992.

\bibitem[Helmbold et~al.(2000)Helmbold, Littlestone, and Long]{HelmboldLL2000}
David~P Helmbold, Nicholas Littlestone, and Philip~M Long.
\newblock Apple tasting.
\newblock \emph{Information and Computation}, 161\penalty0 (2):\penalty0 85--139, 2000.

\bibitem[Kalai and Vempala(2005)]{KalaiVempala2005}
Adam Kalai and Santosh Vempala.
\newblock Efficient algorithms for online decision problems.
\newblock \emph{Journal of Computer and System Sciences}, 71\penalty0 (3):\penalty0 291--307, 2005.

\bibitem[Mannor and Shamir(2011)]{MannorShamir2011}
Shie Mannor and Ohad Shamir.
\newblock From bandits to experts: On the value of side-observations.
\newblock \emph{Advances in neural information processing systems}, 24, 2011.

\bibitem[Raman et~al.(2024)Raman, Subedi, Raman, and Tewari]{RamanSubediRamanTewari2024}
Vinod Raman, Unique Subedi, Ananth Raman, and Ambuj Tewari.
\newblock Apple tasting: Combinatorial dimensions and minimax rates.
\newblock In \emph{The Thirty Seventh Annual Conference on Learning Theory}, pages 4358--4380. PMLR, 2024.

\end{thebibliography}

\end{document}